\renewcommand\footnotetextcopyrightpermission[1]{}
\title[Formally-Sharp DAgger for MCTS]{Formally-Sharp DAgger for MCTS: Lower-Latency Monte Carlo Tree Search using Data Aggregation with Formal Methods}
\author{Debraj Chakraborty}
\affiliation{%
	\institution{Universit\'{e} Libre de Bruxelles}
	\city{Brussels}
	\country{Belgium}
}
\email{debraj.chakraborty@ulb.be}
\author{Damien Busatto-Gaston}
\affiliation{
  \institution{Univ. Paris Est Créteil, LACL, F-94010}
  \city{Creteil}
  \country{France}}
\email{damien.busatto-gaston@u-pec.fr}
\author{Jean-Fran\c{c}ois Raskin}
\affiliation{%
	\institution{Universit\'{e} Libre de Bruxelles}
	\city{Brussels}
	\country{Belgium}
}
\email{jean-francois.raskin@ulb.be}
\author{Guillermo A. P\'erez}
\affiliation{%
	\institution{University of Antwerp}
	\city{Antwerp}
	\country{Belgium}
}
\email{guillermo.perez@uantwerpen.be}
\begin{abstract}
We study how to efficiently combine formal methods, Monte Carlo Tree Search (MCTS), and deep learning in order to produce high-quality receding horizon policies in large Markov Decision processes (MDPs). 
In particular, we use model-checking techniques to guide the MCTS algorithm in order to generate offline samples of high-quality decisions on a representative set of states of the MDP.
Those samples can then be used to train
a neural network that imitates the policy used to generate them. 
This neural network can either 
be used as a guide on a lower-latency MCTS online search,
or alternatively be used as a full-fledged policy when minimal latency is required.
We use statistical model checking to detect when additional samples are needed and to focus those additional samples on configurations where 
the learnt neural network policy differs from the (computationally-expensive) offline policy.
We illustrate the use of our method on MDPs that model the 
Frozen Lake and Pac-Man environments --- two popular benchmarks to evaluate reinforcement-learning algorithms.
\end{abstract}
\keywords{Markov decision processes; Neural networks; Monte Carlo tree search; Model checking; Formal methods}
\newcommand\newmath[2]{\newcommand#1{\ensuremath{#2}\xspace}}
\newcommand\renewmath[2]{\renewcommand#1{\ensuremath{#2}\xspace}}
\newcommand\newmathope[2]{\newcommand#1{\ensuremath{\operatornamewithlimits{#2}}\xspace}}
\newmath{\N}{\mathbb{N}}
\newmath{\Z}{\mathbb{Z}}
\newmath{\Q}{\mathbb{Q}}
\newmath{\R}{\mathbb{R}}
\newmathope{\argmin}{\arg\min}
\newmathope{\argmax}{\arg\max}
\renewmath{\Pr}{\mathbb P}
\newmath{\Dist}{\mathcal D}
\newmath{\Supp}{\mathsf{Supp}}
\newmath{\E}{\mathbb E}
\newmath{\Reward}{\mathsf{Reward}}
\newmath{\AReward}{\mathsf{AReward}}
\newmath{\FPaths}{\mathsf{Paths}}
\newmath{\first}{\mathsf{first}}
\newmath{\second}{\mathsf{second}}
\newmath{\last}{\mathsf{last}}
\newmath{\States}{\mathsf{States}}
\newmath{\Val}{\mathsf{Val}}
\newmath{\Win}{\mathsf{Win}}
\newmath{\ValSafety}{\mathsf{ValSafety}}
\newmath{\ValReach}{\mathsf{ValReach}}
\newmath{\Shield}{\mathsf{Shield}}
\newmath{\len}{\mathsf{len}}
\newmath{\Opt}{\mathsf{Opt}}
\newmath{\NN}{\mathsf{NN}}
\newmath{\reward}{\mathsf{reward}}
\newmath{\numsamples}{\mathsf{count}} 
\newmath{\children}{\mathsf{children}}
\newmath{\mctsvalue}{\mathsf{approxValue}}
\newmath{\total}{\mathsf{total}}
\newmath{\I}{\mathcal{I}}
\newmath{\iter}{\mathsf{iter}}
\newmath{\A}{\mathscr A}
\newmath{\Apsi}{\A^{\psi}}
\newmath{\Aphi}{\A^{\varphi}}
\begin{document}


\pagestyle{fancy}
\fancyhead{}


\maketitle


\section{Introduction}

Markov decision processes (MDPs) are frameworks to
 model sequential decision making. They are discrete-time stochastic models where an agent chooses actions based on the current state. The agent then receives a reward and the state of the MDP is updated based on a probabilistic transition function.
Exact algorithms, or formal methods, for MDPs have been studied since the 1950s and efficient (and symbolic) versions of these algorithms have been implemented in probabilistic model-checking tools such as PRISM~\cite{KNP11} and Storm~\cite{STORM22}. The latter, as well as other tools, are regularly compared with respect to a large body of benchmarks from the Quantitative Verification Benchmark Set~\cite{qcomp2020}.

While tools like PRISM and Storm can handle very large systems, some applications arising from real-world systems and video games like {\sc{Pac-Man}} are still out of reach. In contrast, novel (deep) reinforcement learning (RL) techniques or online heuristic search techniques, like Monte Carlo Tree Search (MCTS)~\cite{DBLP:journals/tciaig/BrownePWLCRTPSC12}, are able to produce policies for larger MDPs~\cite{DBLP:conf/isola/AshokBKS18}, albeit at the cost of either high sample complexity (\textit{i.e.}~they require much data to be trained), or high latency (\textit{i.e.}~they require much time before choosing a next action), and weaker performance guarantees.

In this work, we aim at combining exact  methods, such as model checking, and MCTS to improve the quality of policies synthesized in large MDPs. Concretely, we make use of the MCTS algorithm with \emph{symbolic advice} (coming from formal methods), as proposed in~\cite{DBLP:conf/concur/Busatto-Gaston020}, to increase reliability of MCTS. Further, to improve the latency of MCTS augmented with advice, we propose to replace advice coming from exact algorithms with a neural network, trained on data from the exact advice, that we call \emph{neural advice}.
Finally, we also experiment with training a \emph{surrogate} neural-network policy to imitate MCTS (with advice) altogether. Once more, to realize this efficiently, with respect to sample complexity, we leverage exact methods to obtain ``perfect data'' and we generate additional samples on demand when the performance of the learnt neural network does not match the quality of the policy computed offline. This step uses statistical model checking~\cite{DBLP:reference/mc/2018} instead of classical metrics from machine learning.

\subsubsection*{Contribution.}
 We consider our main contributions to be (1) an \emph{expert imitation framework} to train a neural network in order to replace exact advice by lower-latency neural advice, or to imitate the \emph{expert} policy that can be computed offline, and (2) this imitation framework relies on a data generation algorithm which leverages formal methods to obtain ``perfect data'' for our samples and  
 to
 generate additional samples
 , as long as statistical model checking indicates that it is required to improve the quality of the imitation.

\paragraph{Imitating experts} Imitation can take different forms depending on the expert policy (or advice).
In general, we define a ranking of actions for every state such that the maximally ranked elements are those played by the policy. 
Intuitively, the ranking tells us how good every action is from the current state.
We propose to train a neural network to learn such a ranking function as an offline step. This neural network can then be used as a full-fledged policy or as a neural advice to efficiently guide MCTS. 
The neural advice aims for an expected reward comparable with the expert advice, for a fraction of its online latency.

\paragraph{Data generation and aggregation} Recall that we propose to train a neural network to imitate an expert advice. The expert advice is usually implemented as an exact algorithm. In this case, given a set of inputs for the neural network, the original expert advice can be used to obtain (offline) a ``perfect'' set of corresponding outputs to train on. In contrast, when training a neural network to imitate the full MCTS-with-advice algorithm, 
the data can be noisy for one of two reasons: we are sampling from a randomized policy,
and the expert policy we are imitating may not always match the optimal policy of the MDP.
In both cases, a remaining challenge is to generate a representative set of inputs for the network to be trained on.
We propose to enrich the set of data using formal methods to compare the behaviour of the trained neural network with that of the expert policy in what resembles a Counterexample-Guided Abstraction-Refinement loop~\cite{DBLP:journals/jacm/ClarkeGJLV03}. Our experiments show that this data aggregation loop can speed up learning significantly.

\paragraph{Evaluating a neural network} In order to stop the data aggregation loop, we do not only rely on classical machine learning criteria to evaluate the quality of the generated policies, but also monitor the practical performance of the neural networks.
Indeed, our setting requires taking decisions sequentially for many steps, so that small errors could accumulate over time.
Thus, classical metrics such as computing a loss function on a testing dataset may not be representative of the expected reward a neural network will obtain when used as an advice or a full policy, e.g. a policy may make mistakes at crucial moments despite being almost always correct in its decisions.
Instead, we use statistical model-checking to compute an approximation of the expected reward of our policies.

\subsubsection*{Related work}
Our implementation of the MCTS algorithm with symbolic advice closely follows the approach described in~\cite{DBLP:conf/concur/Busatto-Gaston020}.
However, while they relied on qualitative advice based on quantified Boolean formulae (QBF) and SAT solvers,
we use more quantitative notions instead, based on probabilistic model checking and neural networks.
Our approach also resembles the {\em shielding} framework~\cite{DBLP:conf/aaai/AlshiekhBEKNT18,DBLP:conf/concur/0001KJSB20} used to add safety properties to RL algorithms. One difference is that our technique does not require one to construct the entire MDP, making our work scalable to larger MDPs.

Using deep learning to replace expert (but expensive) policies by learnt policies
is known to be advantageous when the expert policy is unable to meet real-time (latency) constraints (see, e.g.~\cite[Section 5.2]{ivanov2019verisig} and~\cite{learnmpc}).
In order to obtain a satisfactory dataset to train on, we propose a sharp variant of the
DAgger algorithm, a dataset aggregation technique introduced in~\cite{pmlr-v9-ross10a,pmlr-v15-ross11a}. 
A notable difference is that we propose to use model checkers instead of human experts in order to get better-quality data.
We also identify so-called \emph{counterexample} configurations in order to guide the aggregation loop to the most interesting states.
This is reminiscent of counter-example guided abstraction refinement (CEGAR) approaches for hybrid systems such as~\cite{DBLP:conf/aips/ClaviereDS19} that identify states violating a property then focus the deep learning procedures on such states.  

Finally, we rely on statistical model checking~\cite{younes2002probabilistic} to efficiently evaluate particular policies for the system. This consists in running simulated executions of the MDP and computing statistics with confidence guarantees. However, such techniques are not known to find (or approximate) the optimal policies for our reward structures, as that would require using MCTS-like simulation techniques.

\section{Preliminaries}
A \emph{probability distribution} on a countable set $S$ is a function $d:S\to [0,1]$ such that $\sum_{s\in S}d(s)=1$.
We denote the set of all probability distributions on set $S$ by $\Dist(S)$. The support of a distribution $d\in \Dist(S)$ is $\Supp(d)=\{s\in S\mid d(s)>0\}$.

\subsection{Markov chain}
\begin{definition}[Markov chain]\label{def:mc}
	A (discrete-time) Markov chain (MC) is a tuple $M=(S,P,AP,L)$, where
	$S$ is a countable set of states,
	$P$ is a mapping from $S$ to $\Dist(S)$, 
	$AP$ is a finite set of atomic proposition and
	$L$ is the labelling function from $S$ to $2^{AP}$.
\end{definition}
For states $s,s'\in S$, $P(s)(s')$ denotes the probability of moving from state $s$ to state $s'$ in a single transition and we denote this probability $P(s)(s')$ as $P(s,s')$.
We say that the atomic proposition $a$ holds in
a state $s$ if $a\in L(s)$.
For a Markov chain $M$, a
\emph{finite path} $p = s_0s_1\ldots s_i$ of length $i\ge 0$ is
a sequence of $i+1$ consecutive states such that for all $t\in[0,i-1]$, $s_{t+1}\in \Supp(P(s_t))$.
Similarly, An infinite path is an infinite sequence $p = s_0s_1s_2\ldots$ of states such that for all $t\in\N$, $s_{t+1}\in \Supp(P(s_t))$.
For a finite or infinite path $p=s_0s_1\ldots$, we denote its $(i+1)^{th}$ state by $p[i] = s_i$.
Let $p=s_0s_1\ldots s_i$ and $p'=s'_0s'_1\ldots s'_j$ be two paths
such that $s_i=s'_0$.
Then, $p\cdot p'$
denotes $s_0s_1\ldots s_is'_1\ldots s'_j$.
For an MC $M$, the set of all finite paths of length $i$ (resp.~infinite paths)
is denoted by $\FPaths^i_{M}$ (resp.~$\FPaths^{\omega}_{M}$).
We denote the set of all finite paths in $M$ by $\FPaths_{M}$ and the set of finite paths of length
at most $H$ by $\FPaths^{\leq H}_{M}$.
For $p\in\FPaths_{M}$, let $\FPaths^{\omega}_{M}(p)$ denote the set of paths $p'$ in $\FPaths^{\omega}_{M}$ such that there exists $p''\in\FPaths^{\omega}_M$ with $p'=p\cdot p''$. $\FPaths^{\omega}_{M}(p)$ is called the cylinder set of $p$.

The $\sigma$-algebra associated with the MC $M$ 
is the smallest $\sigma$-algebra that contains the cylinder sets $\FPaths_{M}^{\omega}(p)$ for all $p\in \FPaths_{M}$. For a state $s$ in $S$, a measure is defined for the cylinder sets as
\begin{align*}
	\Pr_{M,s}(\FPaths_{M}^{\omega}(s_0s_1\ldots s_i))=&\begin{cases}
		\prod_{t=0}^{i-1}P(s_t)(s_{t+1}) &\text{if } s_0 = s\\
		0 &\text{otherwise.}
	\end{cases}
\end{align*}
We also have $\Pr_{M,s}(\FPaths_{M}^{\omega}(s)) = 1$ and $\Pr_{M,s}(\FPaths_{M}^{\omega}(s')) = 0$ for $s'\neq s$.
Using Carath\'{e}odory's extension theorem \cite[section 1.3.10]{AshDole99}, this can be extended to a unique probability measure $\Pr_{M,s}$ on the aforementioned $\sigma$-algebra. In particular, if $\mathcal C\subseteq\FPaths_{M}$ is a set of finite paths forming pairwise disjoint cylinder sets, then
$\Pr_{M,s}(\cup_{p\in \mathcal C}\FPaths_{M}^{\omega}(p))=
\sum_{p\in \mathcal C}\Pr_{M,s}(\FPaths_{M}^{\omega}(p))$.
Moreover, if $\Pi\in\FPaths^{\omega}_{M}$ is the complement of a measurable set $\Pi'$, then $\Pr_{M,s}(\Pi)=1-\Pr_{M,s}(\Pi')$.

\subsection{Probabilistic computation tree logic}\label{sec:pctl}
Probabilistic computation tree logic or \emph{PCTL} is a branching temporal logic which formulates conditions on a Markov chain.
PCTL state formulae over a set of atomic propositions $AP$ are defined according the following grammar:
$$\varPhi:= true \mid a \mid \varPhi_1\wedge\varPhi_2 \mid \neg\varPhi \mid \Pr_J(\varphi)$$
where $a\in AP$, $\varPhi_1$ and $\varPhi_2$ are state formulae, $\varphi$ is a path formula and $J\subseteq[0,1]$ is an interval with rational bounds.
PCTL path formulae are defined according the following grammar:
$$\varphi:=\bigcirc \varPhi \mid \varPhi_1\mathcal{U}\varPhi_2 \mid \varPhi_1\mathcal{U}^{\leq n}\varPhi_2$$
where $\varPhi_1$ and $\varPhi_2$ are state formulae and $n\in \N$.

The satisfaction relation $\models$ between an infinite path $p = s_0s_1\ldots$ and a PCTL path formula is defined as follows:
\begin{itemize}
	\item $p\models \bigcirc \varPhi$ if $p[1]\models \varPhi$.
	\item $p\models \varPhi_1\mathcal{U} \varPhi_2$ if $\exists i\in\N$ s.t. $s_i \models \varPhi_2$ and $\forall j<i$, $p[j] \models \varPhi_1$.
	\item $p\models \varPhi_1\mathcal{U}^{\leq n} \varPhi_2$ if $\exists i\leq n$ s.t. $s_i \models \varPhi_2$ and $\forall j<i$, $p[j] \models \varPhi_1$.
\end{itemize}

We define the probability of a 
path formula $\varphi$ holding at 
$s\in S$ by
$$\Pr_M(s\models \varphi) =
\Pr_{M,s}(\{p\in \FPaths^{\omega}_M(s)\mid p\models\varphi\})$$

The satisfaction relation $\models$ between a state $s\in S$ and a PCTL state formula is defined inductively: $s\models true$, $s\models a$ if $a\in L(s)$, and
\begin{itemize}
	\item $s\models \varPhi_1\wedge\varPhi_2$ if $s\models \varPhi_1$ and $s\models \varPhi_2$.
	\item $s\models \neg \varPhi$ if $s\not\models \varPhi$.
	\item $s\models \Pr_J(\varphi)$ if $\Pr_M(s\models \varphi)\in J$.
\end{itemize}

Using the Boolean connectives $\wedge$ and $\neg$, we can define other Boolean connectives such as $\vee$, $\rightarrow$, $\leftrightarrow$.
The $\mathcal{U}$ operator (and its bounded version) also allows us to define other useful operators such as  $\Diamond$ that expresses reachablility or  $\Box$ that expresses safety: \begin{align*}
	\Diamond \varPhi = true\ \mathcal{U} \varPhi&\text{\quad and \quad}
	\Box \varPhi = \neg\Diamond\neg\varPhi\\
	\Diamond^{\leq n} \varPhi = true\ \mathcal{U}^{\leq n} \varPhi&\text{\quad and \quad}
	\Box^{\leq n} \varPhi = \neg\Diamond^{\leq n}\neg\varPhi
\end{align*}

\subsection{Markov decision process}\label{sec:MDP}

\begin{definition}[Markov decision process]\label{def:mdp}
	A Markov decision process (MDP) is a tuple $M=(S,A,P,R,R_T,AP,L)$, where
	$S$ and $A$ are finite sets of states and actions, respectively, 
	$A$ is a finite set of actions,
	$P$ is a mapping from $S\times A$ to $\Dist(S)$, 
	$R$ is a mapping from $S\times A$ to $\R$, 
	$R_T$ is a mapping from $S$ to $\R$, 
	$AP$ is a finite set of atomic proposition and
	$L$ is the labelling function from $S$ to $2^{AP}.$
\end{definition}
$P(s,a)(s')$ denotes the probability that action $a$ in state $s$ leads to state $s'$ and we denote this probability $P(s,a)(s')$ as $P(s,a,s')$. $R(s,a)$ defines the reward obtained for taking action $a$ from state $s$ and $R_T$ assigns a terminal reward to each state in $S$.

\begin{example}[Frozen Lake]\label{ex:frozenLake}
	We can represent the game Frozen Lake~\cite{frozenLakeGym} as an MDP. In this game, a robot moves in a slippery grid. It has to reach the target while avoiding holes in the grid. Each state in the MDP represents the current position of the robot in the grid. The states representing the target and the holes can be assumed to be sink states, i.e., the robot cannot move to any other positions from this state. Part of the grid contains walls and the robot cannot move into it. The frozen surface of the lake being slippery,
	when the robot tries to move by picking a cardinal direction, the next state is determined randomly over the four neighbouring positions of the robot, according to the following distribution weights: the intended direction gets a weight of $10$, and other directions that are not a wall and not the reverse direction of the intended one get a weight of $1$, the distribution is then normalized so that weights sum up to $1$.
	There are no rewards, and the terminal reward is $1$ when the robot reaches the target and $0$ otherwise.
\begin{figure}[h]
	\centering
	\includegraphics[width=0.5\columnwidth]{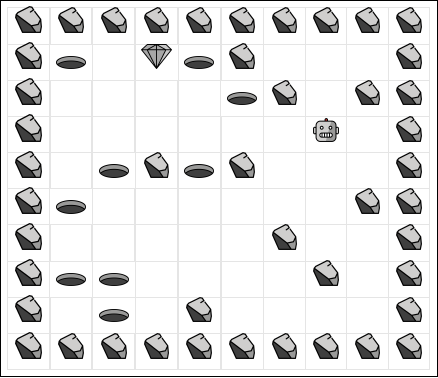}
	\caption{A $10\times 10$ layout for Frozen-Lake}\label{fig:FrozenLake}
\end{figure}

\end{example}

\begin{example}[{\sc{Pac-Man}}]\label{ex:pacman}
	We can represent the multiagent game {\sc{Pac-Man}} as a Markov decision process. In this game Pac-Man  has to eat food pills in an enclosed grid as fast as possible while avoiding the ghosts. The agents (Pac-Man and the ghosts) can travel in the four cardinal directions unless they are blocked by the walls in the grid. Moreover, the ghosts cannot reverse their direction of travel, and are moving uniformly at random among the directions that are left. In the MDP, the states encode a position for each agent\footnote{The last action played by ghosts should be stored as well, as they are not able to reverse their direction.} and for
	the food pills in the grid, while the actions encode individual Pac-Man moves, and while the next state 
	is chosen according to the probabilistic models of the ghosts.
	The reward decreases by 1 at each step, and increases by 10 whenever Pac-Man eats a
	food pill. A win (when the Pac-Man eats all the food pills in the grid) increases the reward by
	500. Similarly, a loss (when the Pac-Man makes contact with a ghost), decreases the reward
	by 500.
	\begin{figure}[h]
		\centering
		\includegraphics[width=\columnwidth]{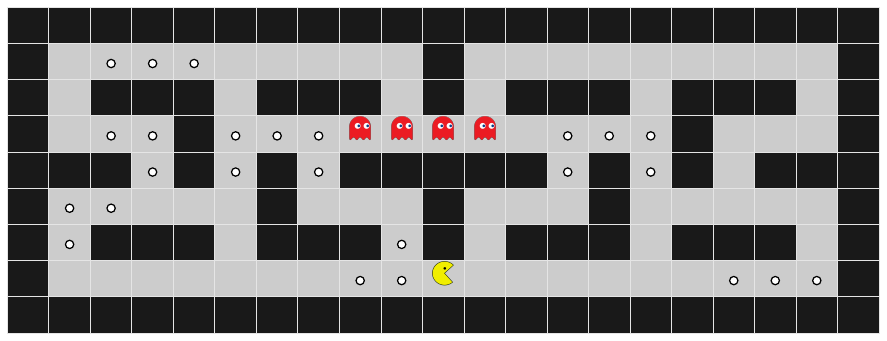}
		\caption{A $21\times 9$ layout for Pac-Man}\label{fig:Pacman}
	\end{figure}
\end{example}

The definitions and notations used for paths in Markov chain can be extended to the case of MDPs. In an MDP, a \emph{path} is
a sequence of states and actions.

For a Markov decision process $M$, a (probabilistic) \emph{policy} is a function $\sigma : \FPaths_{M} \to \Dist(A)$
that maps a path $p$ to a probability distribution in $\Dist(A)$.
A policy $\sigma$ is \emph{deterministic} if the support of
the probability distributions $\sigma(p)$ has size $1$.
A policy $\sigma$ is \emph{memoryless} if $\sigma(p)$ depends only on $\last(p)$,
i.e. if $\sigma$ satisfies that for all $p,p'\in\FPaths_{M}$, $\last(p)=\last(p')\Rightarrow\sigma(p)=\sigma(p')$.

An MDP $M$ and a policy $\sigma$ define an MC $M_{\sigma}$. Intuitively, this is obtained by unfolding $M$, using the policy $\sigma$ and the probabilities in $M$ to define the transition probabilities and ignoring the rewards. Formally $M_{\sigma} = (\FPaths_{M},P_{\sigma},AP,L_{\sigma})$ where for all paths $p\in \FPaths_{M}$, $P_{\sigma}(p)(p\cdot as) = \sigma(p)(a)\cdot P(\last(p),a)(s)$ and $L_{\sigma}(p)=L(\last(p))$.
Thus a finite path $p$ in $\FPaths_{M}(\sigma)$ uniquely \emph{matches} a finite path $p'$ in $M_{\sigma}$ when $\last(p') = p$. This way when a policy $\sigma$ and a state $s$ is fixed, the probability measure $\Pr_{M_{\sigma},s}$ defined in $M_{\sigma}$ is also extended for paths in $\FPaths_M(\sigma)$.
For ease of notation, we write $\Pr_{M_{\sigma},s}$ as $\Pr_s^{\sigma}$. We write the expected value of a random variable $X$ with respect to the probability distribution $\Pr_s^{\sigma}$ as $\E_s^{\sigma}(X)$.

Our goal is to maximize the expected rewards obtained by a policy.
Classically, this can mean maximizing the sum of rewards up to a finite horizon, or maximizing infinite-horizon metrics such as average reward or discounted sum.
In our experiments on Frozen Lake and {\sc{Pac-Man}}, we optimize for the total reward objective after fixing a horizon at which the game ends in a draw. 
\begin{definition}[Total reward]
	The total reward of horizon $h$ for a path $p = s_0a_0\ldots $ in $M$
	is defined as $\Reward^h_{M}(p)=\sum_{i=0}^{h-1}R(s_i,a_i)+R_T(s_h)$.
	The \emph{expected total reward} of a policy $\sigma$ in an MDP $M$,
	starting from state $s$ and for a finite horizon $h\in \N$, is defined as
	$$\Val^h_{M}(s,\sigma) = \E^{\sigma}_s\left[\Reward^h_{M}\right]\,.$$
	The optimal expected total reward of horizon $h$, starting from $s$,
	over all policies $\sigma$ in the MDP $M$ is
	$\Val^h_{M}(s)=\sup_{\sigma}\Val^h_{M}(s,\sigma)$.
\end{definition}

One can show that there is a deterministic policy that achieves this supremum~\cite[Theorem 4.4.1.b]{DBLP:books/wi/Puterman94}.
Such optimal policies may not be memoryless, as one can change their behaviour as the horizon $h$ approaches for example.
As the choice of $h$ is arbitrary, we would like to 
find policies that achieve a good expected total reward 
independently of $h$ (\textit{i.e.}~for every $h$ that is big enough). 
We will focus our search on (randomized) memoryless policies as a result.\footnote{Note that in some situations, randomization can help a memoryless policy emulate the behaviour of a non-memoryless policy~\cite{chatterjee2004trading}.}

\section{Expert policies}\label{sec:planning}

We describe different policies computed using a combination of formal methods, heuristic search algorithms and machine learning that all aim for the optimal expected total reward.

\subsection{Formal methods}\label{subsec:MC}
\paragraph{Model checking} Exact methods can be used to compute a policy that reaches the optimal expected total reward, \textit{e.g.} with dynamic programming (value iteration)~\cite[Section 4.5]{DBLP:books/wi/Puterman94}. They have been efficiently implemented in probabilistic model-checkers such as Storm~\cite{STORM22}, that offer support for a large range of specifications.
More specifically, given a model (Markov chain or MDP), a reward structure and a specification
such as a PCTL formula as defined in Section~\ref{sec:pctl},
Storm can determine whether the input model conforms to the specification and
compute expected rewards for a range of finite or infinite horizon metrics
such as total or average reward. For MDPs, probabilistic model-checkers can also output an optimal policy associated with the optimal expected reward that they compute.
Such tools have been designed with performance in mind and can typically handle models of size up to $10^8$ states. 
Exact methods are thus applicable for smaller MDPs such as the MDP obtained for Frozen Lake in Example~\ref{ex:frozenLake}, but not for larger models such as {\sc{Pac-Man}} (the MDP represented in \figurename~\ref{fig:Pacman} already has approximately $10^{16}$ states). For larger MDPs, formal methods offer alternative techniques \textit{e.g.}~based on sampling.
\paragraph{Symbolic MDP} The MDP can be described symbolically in the PRISM~\cite{KNP11} language,
a guarded command language where one only needs to specify abstract rules that the transitions must satisfy.

\paragraph{Statistical model checking} Computations can make use of statistical model-checking techniques to find good approximations 
of the expected reward of a policy.
By relying on running simulations and computing statistics, it  
offers 
confidence guarantees on the quality of the approximated expected reward.
We also use the Storm model-checker in this context, as it is capable of producing simulated paths efficiently for an MDP in PRISM format.

\paragraph{Scalability} In both cases, one can scale to larger models by focusing on smaller horizons or sub-objectives 
for which the MDP can be abstracted further. The idea is that for simple parts of the specification, the relevant aspects of the model may define a much smaller MDP.
For example, if we focus on a safety objective in {\sc Pac-Man} (not being eaten is a necessity in order to get a good reward), we can ignore the status of the food pellets, \textit{i.e.}~which food has already been eaten, 
reducing the state-space from a size of $10^{16}$ to $10^7$.
Overall, whenever exact methods become too expensive we will rely on heuristic approaches based on a combination of fixed-horizon and sampling-based state-space exploration techniques.

\subsection{Monte Carlo tree search}

We consider online procedures where the controller, upon visiting a new state $s$, computes what action $a$ it thinks is best, and plays it. Then, the state evolves stochastically to a new state $s'$ according to the distribution $P(s,a)$. 
This is known as \emph{decision-time planning}~\cite[Chapter 8.8]{sutton2018reinforcement}.
Specifically, we rely on the 
\emph{receding horizon control} approach, where the controller fixes a small horizon $H$ and finds an action that optimizes the expected total reward of horizon $H$. 
This approach is meant to select decisions with good short-term consequences, while a well-chosen terminal reward function can be used to predict long-term behaviors from there. 

Given an initial state $s$, \emph{Monte Carlo tree search} or MCTS algorithm \cite{DBLP:journals/tciaig/BrownePWLCRTPSC12}
is a popular policy that
incrementally
constructs a search tree rooted at $s$
describing paths of the MDP.
This process goes on until a specified budget
(of number of iterations or time) is exhausted.
An iteration constructs a path by following a decision policy
to \emph{select} a sequence of nodes in the search tree. When a node that is not
part of the current search tree is reached, the tree is expanded with this new node,
whose expected reward is approximated by \emph{simulation}.
This value is then used to update the knowledge of all selected nodes in \emph{backpropagation}.
Thus, we get a value estimation $\mctsvalue(s,a)$ for all actions $a$ from the state $s$. Then the controller takes the action maximizing $\mctsvalue(s,a)$.

\subsection{\textbf{Monte Carlo tree search with advice}}\label{sec:MCTSa}
MCTS can be augmented with \emph{symbolic advice}~\cite{DBLP:conf/concur/Busatto-Gaston020} which prune a part of the search tree according to formal specifications meant to differentiate the ``good'' and ``bad'' parts of the tree.

A qualitative approach considers a logical formula that the ``good'' paths need to satisfy.
For example, consider the set of states labelled with $loss$ where Pac-Man gets eaten by a ghost. Since reaching such a state is heavily penalized, a simple advice would be to avoid such states. Given a horizon $H$ and a state $s\in S$, the search would be restricted in order to satisfy the path formula $\varphi^H = \Box^{\leq H}(\neg loss)$ that encodes that safety constraint.

A more quantitative approach would compute for each action $a$ and over all policies the best probability $\eta_{H}(s,a)$ to satisfy $\varphi^H$ when the action $a$ is taken from $s$: 
$\eta_{H}(s,a)=\sup_{\sigma:\sigma(s)=a}\Pr_s^{\sigma}(s\models\varphi^H)$.
Then, the advice restricts Pac-Man to almost-optimal actions, \textit{i.e.}~decisions $a$ where $\eta_{H}(s,a)\ge t\times\max_{a'} \eta_{H}(s,a')$, where $t$ is a threshold in $[0,1]$.
Probabilistic model-checkers such as Storm can accept logical specifications in PCTL and compute the probability of path formul\ae{} $\varphi$. 
This approach is similar to probabilistic shielding~\cite{DBLP:conf/concur/0001KJSB20} where bad actions are pre-calculated and used to safely explore the search space during reinforcement learning.
A notable difference is that building such a shield requires one to construct the entire state-space of the MDP, whereas our approach performs its computations on-the-fly based on the current position alone.
Note that these computations are frequently performed on smaller models, for example in {\sc{Pac-Man}} we only need to consider a safety-relevant variant of the MDP where food pellets are ignored and that is restricted to states at distance at most $H$ from the current state. 
The practical interest of such advice for MCTS is detailed in~\cite{DBLP:conf/concur/Busatto-Gaston020}.

\section{Imitating expert policies}

In order to reach on-the-fly computing times low enough for real-time control,
we  
train  
policies, encoded as neural networks, 
to imitate an
expert policy.
This can take different forms depending on the expert policy $\sigma$.
In general, we define a function $f_{\sigma}:S\times A\to \R$ encoding the policy $\sigma$
so that from state $s$, the decision made by $\sigma$ is equivalent to choosing an action from $\argmax_{a\in A}(f_{\sigma}(s,a))$ uniformly at random.
Intuitively, $f_{\sigma}$ is a scoring function that rates how good every action is from the current state.
To learn a memoryless policy $\sigma$, this function can output the expected total reward under $\sigma$, or a heuristic score approximating it as returned by MCTS for example.
This framework can also be used to learn quantitative advice, e.g. by using $\eta_H(s,a)$ as a scoring function. In this case, the advice is seen as a (non-deterministic) expert policy to be imitated. This way, we suggest that a symbolic advice can also be imitated by a neural network that can then be used as a \emph{neural advice} in MCTS.

The plan is to teach 
a neural network 
the function $f_{\sigma}$ as an offline step,
and
use it to speed up the computations of decision-time planning.
Depending on  
the set of actions, we can either 
train a neural network that takes a state-action pair $(s,a)$ 
and outputs a single value $f_{\sigma}(s,a)$ or a neural network that takes a state $s$ 
and outputs a vector in $\R^{|A|}$ with
values for each available actions.

We address the following challenges:
encoding a state $s$ and its corresponding values $(f_{\sigma}(s,a))_{a\in A}$ so that it is easily processable by the neural network,
generating data for $f_{\sigma}$ representative of the state-space,
choosing an architecture for the neural network,
comparing the learnt policy and the expert policy $\sigma$.
\subsection{Training a neural network}
We divide our datasets in $5:2:3$ ratios to create distinct datasets for training, validation and testing of the neural networks.
We propose the use of convolutional neural networks which would take a state in the MDP as a \emph{tensor} with each channel of the tensor representing different features extracted from the state. In {\sc{Pac-Man}}, each tensor representing a state has $7$ channels to denote respectively the distribution of walls, food pills, position of Pac-Man, and for each direction, positions of the ghosts who are moving towards that direction. For example the channel representing the distribution of walls would be a matrix $w_{ij}$ of the size of the grid where $w_{ij} = 1$ if there is a wall at the co-ordinate $(i,j)$, and otherwise $w_{ij} = 0$.

We considered different approaches for normalization, either by globally scaling the values between $0$ and $1$ so that
$\min_{s,a}f(s,a)$ becomes $0$ and $\max_{s,a}f(s,a)$ becomes $1$ after normalization,
or scaling
locally so that for all state $s$,
$\min_{a}f(s,a)$ becomes $0$ and $\max_{a}f(s,a)$ becomes $1$. 
We argue that this local normalization is sufficient to learn the policy as it captures the ordering of the actions. Experimentally, local normalization performed better than global normalization. We also experimented with non-linear transformations~\cite{box1964analysis,yeo2000new} 
but they did not improve learning performances in our settings.
Our neural networks contain a 2D convolution layer with $3\times 3$ filters, a flattening layer, a few dense layers with the ReLU activation function and a final dense layer with the sigmoid activation function. Training is performed using ADAM optimizer~\cite{DBLP:journals/corr/KingmaB14} with mean squared error as loss function. To choose the optimal hyperparameters, \textit{e.g.}~the exact number of layers and their size or the number of filters, we use \emph{hyperparameter tuning}~\cite{bergstra2012random} in each setting. In particular, we relied on the Python library {\sc KerasTuner}~\cite{omalley2019kerastuner}.

\subsection{Formally sharp DAgger} 
Let us detail how to construct a set of data of the shape $(x,y)$, where $x\in S$ is the input of the neural network and $y\in\R^{|A|}$ is its output encoding $(f_{\sigma}(x,a))_{a\in A}$.
We argue for the use of formal methods
in order to answer:
how to get a representative set of input values $x$, and how to get good $y$ values for this set of input.

\paragraph{Perfect data}
Note that an expert policy generated by an exact method is ensured an expected payoff higher than any expert policy generated from a heuristic approach. In a sense, if one sees a heuristic approach as an approximation of the optimal policy, the data obtained from heuristic policies
can be seen as a noisy version of data that would otherwise be ``perfect'', \textit{i.e.}~pairs $(x,y)$ where $y$ is a vector encoding the decisions of a policy $\sigma$ that is optimal.

\paragraph{Representative set of inputs}
In order to generate a dataset to train on, a classical method is to pick states and actions uniformly at random within the state-space and to evaluate $f_{\sigma}$ on these inputs.
For example, one can consider Frozen Lake states obtained by placing the walls, the holes, the target and the robot at random empty positions.
However, a neural network trained from such a dataset may perform poorly for
states that play a key role in the expected payoff of a policy (\textit{i.e.}~states that represent crucial decisions), as such states may not be likely to be selected at random within the state-space.
The DAgger (Dataset Aggregation) algorithm, in contrast, offers a dataset generation method based on running simulations in order to get a more realistic view of the states frequently encountered in real plays.
While this approach can be part of the answer, it may not provide sufficiently many datapoints on the crucial decisions mentioned before, that may be few and far-between.

We propose an algorithm named \emph{sharp DAgger} that would detect these states, refine the training set and retrain the network. This is done by simulating the policy using the learnt neural network on the MDP and finding \emph{counter-examples} where the neural network is performing poorly by comparing the value given by the network and the value $f_{\sigma}(s)$ associated with the exact method. 
In Algorithm~\ref{alg:cap}, we present a method to train the neural network by an iterative process that generates new data for the training set. In the first iteration, we train a neural network $\NN_0$ from an initial training dataset $\mathsf{DATASET}$ and in later iterations, we add more interesting data-points in that set. 
Initially, one could either randomly generate a small amount of data 
or simulate the MDP by following a uniform policy. 
In iteration $i$, starting from an initial state $s_0$ in the MDP, we simulate a fixed number of paths until a given horizon $H$. We extract from these paths the states 
for which the current neural network $\NN_i$ trained from $\mathsf{DATASET}$ fails to predict the correct values. We add them to our dataset, then train the next iteration of the neural network.
The decision on when to stop the sharp DAgger loop is taken based on evaluations of
the quality of the neural network $\NN_i$ at each iteration $i$.

\begin{algorithm}[t]
	\caption{Sharp Dataset Aggregation (Sharp DAgger)}\label{alg:cap}
	\KwIn{
		A function $f_{\sigma}:S\to \R^{|A|}$ encoding an expert policy $\sigma$, $s_0\in S$, a metric $d$, $\epsilon\in\R$, a horizon $h\in \N$.}
	\KwOut{A policy $\sigma_i$ that imitates the policy $\sigma$}
	$\mathsf{DATASET} = $ initial dataset\;
	$\NN_0 = $ neural network trained using $\mathsf{DATASET}$\;
	$\sigma_0 = $ policy extracted from $\NN_0$\;
	\For{$0 \leq i\leq iters$}{
		$\FPaths_i$ = paths sim'd following $\sigma_i$ from $s_0$ for $h$ steps\;
		\For{state $s$ in paths $p\in \FPaths_i$}{
			\If{$d(\NN_i(s),f(s))\geq\epsilon$}{
				Add $(s,f(s))$ to $\mathsf{DATASET}$\;
			}
		}
		$\NN_i = $ neural network trained using $\mathsf{DATASET}$\;
		$\sigma_i = $ policy extracted from $\NN_i$\;
	}
	\Return $\sigma_i$;
\end{algorithm}

\subsection{Evaluating a learnt policy}
In order to evaluate the trained neural network, a traditional approach for machine learning can report on a loss function for a test dataset. Alternatively, one can measure the accuracy of the network by reporting how many times the resulting learnt policy has differed from the expert policy as a classifier. But this may not be sufficient to evaluate how the learnt policy is performing on the MDP. 
In Frozen Lake, consider a learned policy that returns the same action as the expert policy for all states in the MDP, except for one state where the learnt policy gives a bad action that leads to a hole.
Even though the learnt policy has an almost perfect accuracy, it would perform badly compared to the expert policy in real plays, and could lead to much worse rewards on expectation.

As such, we argue for the use of \emph{statistical model checking} to evaluate the expected reward of a (neural) policy.
In particular, we can use the approximate probabilistic model checking  method~\cite{herault2004approximate} where we simulate a set of paths following the expert policy on the one hand and the neural policy on the other, then compare their average rewards on these paths.

\begin{theorem}
	Suppose for MDP $M$, there exists $a<b$ such that $a\leq \Reward^h_{M}(p)\leq b$ for all paths $p$ in $M$. Let $\delta\in (0,1]$ and $\epsilon\in (0,b-a]$. Then for a policy $\sigma$, suppose we sample $n\geq \frac{(b-a)^2}{2\epsilon^2}\ln(\frac{2}{\delta})$ paths $p_1,p_2\ldots p_n$ independently at random from a state $s$ in the MDP~$M$ following the policy $\sigma$. Let $\overline{r}=\frac{1}{n}\sum_{i=1}^n\Reward^h_{M}(p_i)$. Then,
	$$\Pr_s^{\sigma}(|\overline{r}-\Val^h_{M}(s,\sigma)|\geq \epsilon) \leq \delta\,.$$
\end{theorem}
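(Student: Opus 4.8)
The statement is a standard concentration-of-measure result, so the plan is to recognize it as an instance of Hoeffding's inequality and to line up the hypotheses accordingly. First I would observe that the random variables $X_i := \Reward^h_{M}(p_i)$, for $i = 1, \dots, n$, are independent and identically distributed: independence is given by the assumption that the paths $p_1, \dots, p_n$ are sampled independently at random from $s$ following $\sigma$, and they share the same distribution because each is drawn from $\Pr_s^{\sigma}$. Crucially, the boundedness hypothesis $a \le \Reward^h_{M}(p) \le b$ for all paths $p$ means each $X_i$ takes values in the bounded interval $[a,b]$ almost surely. Next I would note that by definition of the expected total reward, $\E_s^{\sigma}[X_i] = \E_s^{\sigma}[\Reward^h_{M}] = \Val^h_{M}(s,\sigma)$, and hence by linearity $\E_s^{\sigma}[\overline{r}] = \E_s^{\sigma}\big[\frac{1}{n}\sum_{i=1}^n X_i\big] = \Val^h_{M}(s,\sigma)$. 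So the quantity $|\overline{r} - \Val^h_{M}(s,\sigma)|$ is exactly the deviation of the empirical mean of $n$ i.i.d.\ bounded random variables from its expectation.

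I would then invoke Hoeffding's inequality in its two-sided form: for i.i.d.\ random variables $X_1, \dots, X_n$ taking values in $[a,b]$ with mean $\mu$, and for any $\epsilon > 0$,
$$\Pr\Big(\Big|\tfrac{1}{n}\sum_{i=1}^n X_i - \mu\Big| \ge \epsilon\Big) \le 2\exp\!\Big(\!-\frac{2n\epsilon^2}{(b-a)^2}\Big).$$
Applying this with $\mu = \Val^h_{M}(s,\sigma)$ gives $\Pr_s^{\sigma}(|\overline{r} - \Val^h_{M}(s,\sigma)| \ge \epsilon) \le 2\exp(-2n\epsilon^2/(b-a)^2)$. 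It then remains to check that the prescribed sample size $n \ge \frac{(b-a)^2}{2\epsilon^2}\ln\frac{2}{\delta}$ forces the right-hand side to be at most $\delta$: rearranging, $n \ge \frac{(b-a)^2}{2\epsilon^2}\ln\frac{2}{\delta}$ is equivalent to $\frac{2n\epsilon^2}{(b-a)^2} \ge \ln\frac{2}{\delta}$, i.e.\ $\exp(-2n\epsilon^2/(b-a)^2) \le \delta/2$, i.e.\ $2\exp(-2n\epsilon^2/(b-a)^2) \le \delta$. This closes the argument.

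The only subtlety — and the one point I would be careful about rather than a genuine obstacle — is making sure the probabilistic setup is clean: that $\Reward^h_{M}$ is a genuine measurable random variable on the probability space $(\FPaths^{\omega}_M(s), \Pr_s^{\sigma})$ and that "sampling $n$ paths independently" is formalized as working on the $n$-fold product space, so that Hoeffding's inequality (which is stated for independent variables on a common space) applies verbatim. Since $\Reward^h_{M}(p)$ depends only on the prefix of $p$ of length $h$, it is a finite sum of measurable functions and hence measurable, and the constraints $a \le \Reward^h_{M} \le b$ are assumed to hold surely, so the boundedness needed by Hoeffding holds pointwise (a fortiori almost surely). With those formalities in place the result is immediate; there is no hard analytic step, the content is entirely in correctly instantiating a known inequality and discharging the arithmetic on $n$.
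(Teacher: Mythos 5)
Your proposal is correct and follows essentially the same route as the paper: identify the $\Reward^h_{M}(p_i)$ as i.i.d.\ bounded random variables with mean $\Val^h_{M}(s,\sigma)$, apply the two-sided Chernoff--Hoeffding inequality, and check that the prescribed $n$ makes $2\exp(-2n\epsilon^2/(b-a)^2)\leq\delta$. The extra remarks on measurability and the product-space formalization are fine but not needed beyond what the paper's one-line argument already assumes.
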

\begin{proof}
	We have $n$ independent identically distributed random variables $\Reward^h_{M}(p_i)$ with expected value $\Val^h_{M}(s,\sigma)$. Then, $\E_s^{\sigma}(\overline{r}) = \Val^h_{M}(s,\sigma)$. The Chernoff-Hoeffding inequality~\cite{Hoeffding} then yields
	$\Pr_s^{\sigma}(|\overline{r}-\Val^h_{M}(s,\sigma)|\geq \epsilon) \leq 2\exp\left(-\frac{2n\epsilon^2}{(b-a)^2}\right)\leq \delta\,.\qedhere$
\end{proof}
The above theorem gives a theoretical bound on the number of simulations needed to get a \emph{probably approximately correct} approximation of the real expected reward. In practice, we typically need fewer simulations to achieve a good approximation. For example, consider the Frozen Lake layout in \figurename~\ref{fig:FrozenLake}. 
Using exact methods  
we calculated the optimal expected reward 
to be $0.827$. In \figurename~\ref{fig:SMC}, for $n\in [1,100]$, we independently simulated $n$ paths using the optimal policy and plotted the estimated reward obtained from statistical model checking. We see that we get a good approximation of the real expected reward with under $100$ simulations.
\begin{figure}[h]
	\centering
	\includegraphics[width=0.89\columnwidth]{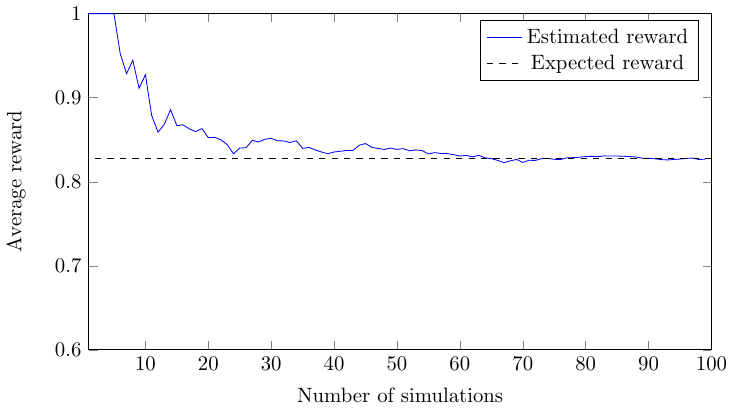}
	\caption{
		Statistical model checking for Frozen Lake
	}\label{fig:SMC}
\end{figure}

\section{Experimental results}
We ran experiments on the two MDPs previously introduced in Section~\ref{sec:MDP}.
Frozen Lake is an MDP that can be fully handled by model-checkers (using exact methods), and as such we use it to report on the benefits of using perfect data to train the surrogate policy.
Whereas, the {\sc Pac-Man} game provides more challenging MDPs to handle. There, we report on the performance of MCTS equipped with perfect or neural advice and on the performance of a surrogate policy trained on data obtained from MCTS. The sharp DAgger algorithm (Algorithm~\ref{alg:cap}) proves to be instrumental for learning efficiently in {\sc Pac-Man}.
The code is available at \cite{debraj_chakraborty_2023_7655528}.
\subsection{Frozen Lake}
For the game described in Example~\ref{ex:frozenLake}, we randomly generated layouts of size $10\text{x}10$ where we place walls at each cell in the border of the grid and with probability $0.1$ at each of the other cells. Then we place holes in remaining cells with probability $0.1$. Finally, we randomly place a target and an initial position in two of the remaining empty cells.
If the game is neither won nor lost within $1000$ steps, the game is considered a draw. 

\subsubsection{Expert policies}
Consider the state in the MDP where the robot is on the target position. We label this state with $target$. Using the model checker, we can compute the policies $\Opt(s) = \argmax_{\sigma}\Pr_{s}^{\sigma}(s\models \Diamond target)$ that maximize the probability to reach the target $t$ starting from state $s$. 
The practical policy that we are interested in should not only maximize the probability to reach the target but also minimize the expected number of steps needed to reach the target (in order to reach it before the horizon $H$ whenever possible). 
For a path $\rho$ in MC $M_{\sigma}$, we define $\len(\rho,target) = i$ if $\rho[i]$ is the target state and for all $j<i$, $\rho[j]$ is not the target set.
Using formal methods techniques, we can calculate a policy in
$$\argmin_{\sigma\in\Opt(s)}\E_s^{\sigma}(\len(\rho,target)\mid \rho\models \Diamond target))\,.$$
This policy can be shown to be optimal for 
total reward of any large enough horizon $H$.
We compared it with the policy generated from MCTS with horizon $H = 30$. From state $s$, a search tree is constructed 
for $40$ iterations. Thus, the search tree constructed by the MCTS algorithm contains up to $40$ nodes. In each iteration, 
when a new node is added to the search tree, $10$ samples are obtained by using a uniform policy to estimate the value of the node.

\subsubsection{Learnt policies}
Our training dataset contained $760k$ data-points which we used to imitate the expert policies. Hyperparameter tuning resulted in neural networks containing a 2D convolution layer with $6$ filters, a flattening layer and $2$ dense layers.
We randomly generated $1000$ layouts and ran $100$ games from each layout for $1000$ steps using both expert policies and the learnt policies. 
The average outcomes are reported in Figure~\ref{fig:FLPlot}.
Using Storm, we calculate the optimal expected win rate to be $93\%$ on average
in the generated layouts. This value denotes the probability to reach the target eventually, using the optimal policy. 
In practice, our statistical model checking approach requires fixing a finite horizon.
Figure~\ref{fig:FLPlot} confirms that 
horizon $1000$ is sufficient 
as the expert policy from Storm
still reaches a win rate of $93\%$. 
In comparison, our policy learnt from Storm had a win rate of $81\%$.
The expert policy calculated using MCTS is suboptimal and showed a win rate of $77\%$ while the policy learnt from it has a win rate of $69\%$. This highlights the benefits of using exact methods to get noise-free data. 
\begin{figure}[h]
	\centering
	\includegraphics[width=\columnwidth]{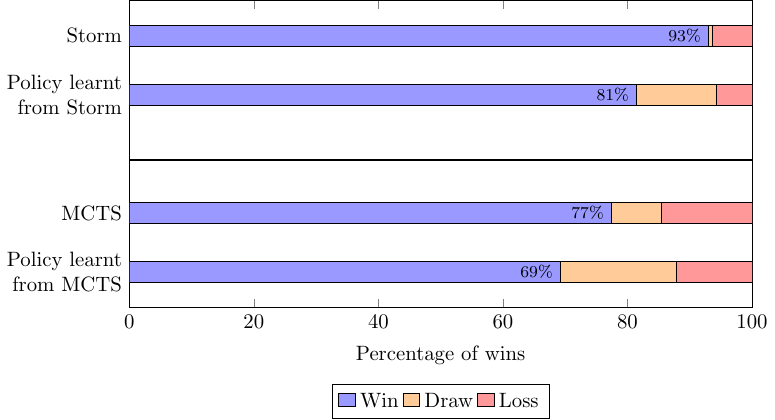}
	\caption{Perfect vs MCTS-based policies for Frozen Lake.
	}\label{fig:FLPlot}
\end{figure}

\subsection{Pac-Man}

We performed our experiments on the game {\sc{Pac-Man}} in a grid of size $9\times 21$ described in Figure~\ref{fig:Pacman}. 
In our experiments,
the ghosts always choose an action uniformly at random from the legal actions available. As explained in Example~\ref{ex:pacman}, we can view this as an MDP.
Moreover, if Pac-Man does not win (eats all food pills) or lose (makes contact with a ghost) within $300$ steps, we consider it a draw.

\subsubsection{Expert policies}
The state-space of the MDP is too large to apply directly to find the optimal policy. As a consequence, we decided to use Monte Carlo tree search with a receding horizon of $H = 10$. 
From state $s$, a search tree is constructed with a maximum depth of $H$ for $40$ iterations.\footnote{$40$ iterations was selected experimentally as a good compromise between achieving high expected rewards and minimising computation time.}
We combined MCTS with the notion of advice as used in \cite{DBLP:conf/concur/Busatto-Gaston020} in order to play Pac-Man.
In each iteration of the MCTS algorithm, when a new node is added, $20$ samples are obtained by using a uniform policy to estimate the value of the node among the paths that are safe i.e.~where Pac-Man is not eaten by a ghost. This optimistic estimation matches the notion of \emph{simulation advice} of~\cite{DBLP:conf/concur/Busatto-Gaston020}.
During the exploration of the search tree, we also restrict ourselves to actions $a$ that maximize the probability to stay safe for the next $8$ steps, \textit{i.e.}, actions $a$ such that
$\eta_8(s,a)=\max_{a'\in A}\eta_8(s,a')$ as defined in Section~\ref{sec:MCTSa}.
Since the online computation of the $\eta_8$ function is too expensive to be done at every node of the search tree, we only restrict the root node of the tree so as to ensure the safety of the immediate decisions.

We compare four different variants of MCTS in Figure~\ref{fig:pacmanPlot}: a version without this expert (safety) advice, one where it is used at the root node of the tree, one where a neural advice is trained to imitate the safety advice and is used at the root node, and finally one where the neural advice is used at every node in the tree. For reference, 
\cite{DBLP:conf/concur/Busatto-Gaston020} reports that human players win $44\%$ of the time on this grid.

\begin{figure}[t]
	\centering
	\includegraphics[width=\columnwidth]{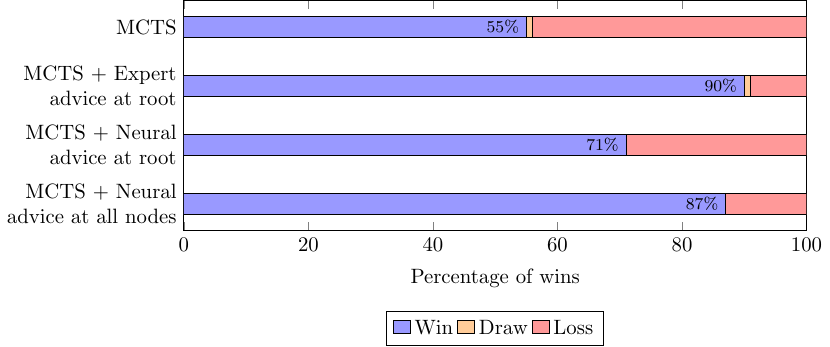}
	\caption{Different MCTS variants for Pac-Man.
	}\label{fig:pacmanPlot}
\end{figure}

\subsubsection{Neural advice}
To speed up the MCTS procedure we train a neural network to imitate the safety advice. 
We used Algorithm~\ref{alg:cap} to create a dataset. 
We use the \emph{$L_{\infty}$ metric} 
with precision value $\epsilon = 0.2$ to find new data-points during the aggregation. In other words, we add $(s,(\eta_{H}(s,a))_{a\in A})$ to the dataset at the $i^{th}$ iteration of sharp DAgger if
$\max_{a\in A}(|\eta_{H}(s,a)-\NN_i(s,a)|)>0.2$.
\begin{figure}[h]
	\centering
	\includegraphics[width=\columnwidth]{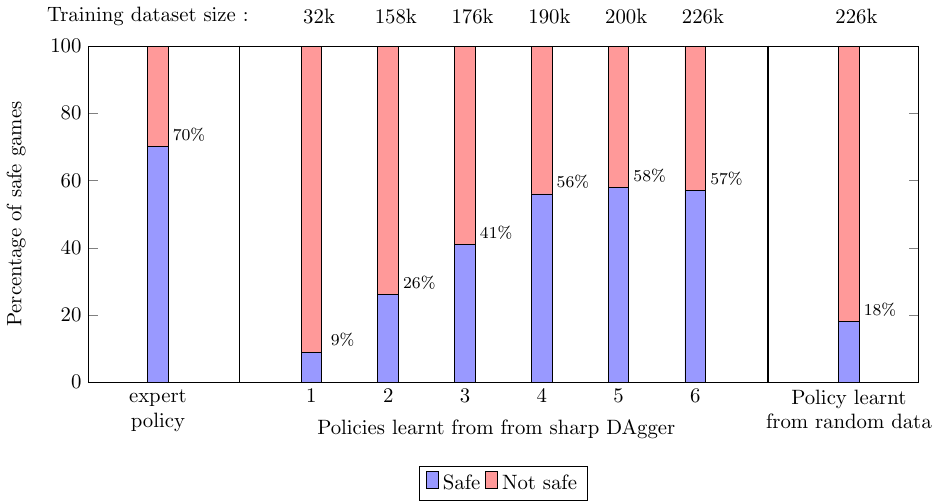}
	\caption{Sharp DAgger for {\sc Pac-Man} neural advice
	}\label{fig:safetyPlot}
\end{figure}
In each iteration, we simulate $4000$ games for $300$ steps to generate $\FPaths_i$.
We compare the safety status of the neural networks at each iteration of sharp DAgger in Figure~\ref{fig:safetyPlot}.
After $5$ iterations, we observe that Pac-Man stays safe (for $300$ steps) in $58\%$ of games when using the learnt policy instead of staying safe in $70\%$ of games
with the policy calculated from model checking.	
Hyperparameter tuning stabilized on neural networks using a 2D convolution layer with $6$ filters, a flattening layer and $4$ dense layers.
The entire training dataset generated from sharp DAgger contains $226k$ data-points. 
To check the effectiveness of our method of data aggregation, we compare our learnt policy with a policy trained on $226k$ randomly generated data-points.
This learnt policy performs worse and stays safe in only $18\%$ of games.
Dataset generation and training of the neural networks was performed in 36 hours with a cluster of 250 CPU cores, for a total of 9000 hours of computing time (at 2.9 GHz).

\subsubsection{Using the neural advice in MCTS}
To accommodate for the inherent noise in the output of the neural network $\NN$, we fix a threshold $t = 0.9$ and 
consider the advice that
allows almost-optimal actions with respect to~$t$,
\textit{i.e.}~the neural advice that restricts to 
actions $a$ such that $\NN(s,a)\geq 0.9\times \max_{a'\in A}NN(s,a')$.

We compare in Figure~\ref{fig:pacmanPlot} the performance of MCTS variants using expert or neural policies as advice. 
We ran each setup on $100$ games. 

The Python implementation of MCTS that we rely on was not designed to optimize the performance in terms of computing time. In our case, the MCTS algorithm without any selection advice uses $9$ seconds to decide on an action. Using the (formal methods based) expert advice at the root node of MCTS increases the time per decision by $8$ extra seconds. While the $9$ seconds spent in MCTS can be expected to be vastly lowered using code improvements,\footnote{MCTS and other simulation-based techniques are highly amenable to parallelism  \cite{Chaslot08}.} the model checking done by Storm is already optimized. By replacing the expert advice with a neural advice,
we can avoid this fixed cost of $8$ seconds per decision, as the network can be consulted in $3$~ms instead.
While the neural advice is not as good as the expert advice (it 
ensures safety in $71\%$ of games instead of $90\%$ when used identically at the root node of the MCTS tree), we can afford to use it on every node of the search tree to dynamically prune the search.
In this way, we can get an $87\%$ win-rate that is the best of both worlds: we approach the win-rate of the expert advice with the computing time of the bare-bones version of MCTS.
Since the neural advice requires much less computational power per call than the expert advice, using it would compensate the expensive computational cost of its training in the long run. In our case, we break even after 4 million calls (roughly 40k games of Pac-Man).

\subsubsection{Learning a surrogate policy}

We trained a surrogate neural network to imitate the expert policy defined previously as MCTS with a neural advice at every node, that reached an $87\%$ win-rate while keeping computing times as low as possible.
To generate the dataset, we use our sharp DAgger algorithm and simulate $4000$ games with horizon $300$ in each iteration. 
To evaluate how well our policies are performing, 
we compare the average number of wins obtained by following them in Figure~\ref{fig:mctsPlot}.
\begin{figure}[h]
	\centering
	\includegraphics[width=\columnwidth]{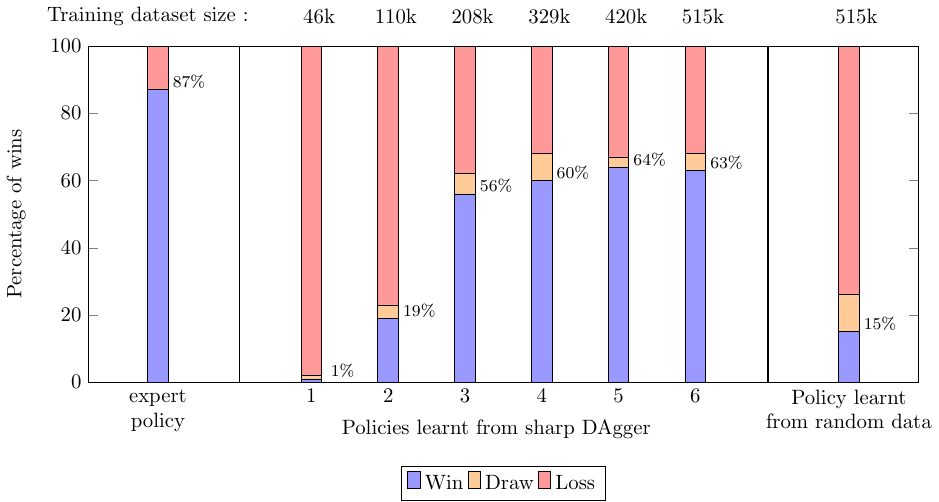}
	\caption{Sharp DAgger for {\sc Pac-Man} surrogate policies.
	}\label{fig:mctsPlot}
\end{figure}

After $5$ iterations, we  reach a policy with a win-rate of $64\%$, which is 
higher than the $55\%$ of the ``standard'' version of MCTS,
while having almost no need for 
online computing time as it is using a pre-trained neural network. 
Hyperparameter tuning stabilized on neural networks using a 2D convolution layer with $5$ filters, a flattening layer, $5$ dense layers.
Finally, the training dataset generated with sharp DAgger contains $515k$ data-points. In comparison, a policy learned from a randomly generated dataset
of size $515k$ is only able to win in $15\%$ of games, which confirms the importance of sharp DAgger in this setting.

\section{Conclusion}
In this work, we have proposed a framework to combine formal methods with MCTS and deep learning to obtain a scalable way of synthesising policies with both good performance and low latency.

From our experiments, 
we conclude that 
formal methods can provide good policies and useful advice for MCTS, albeit at a high computational cost. Training a neural network to play the role of the advice
allows one to obtain the best of both worlds: the performance boost of the advice but without its computational cost. Particularly, neural advice compensates for its expensive computational training cost in the long run since it requires less computational power per call than expert advice. Using a sharp dataset-aggregation procedure is instrumental in reaching satisfactory rewards in practice because of the reliance of deep-learning techniques on the accumulation of huge amounts of data. Finally, while
the best policy that we obtained for {\sc Pac-Man} is based on MCTS, its surrogate neural-network policy is able to play relatively well while making near instantaneous decisions.



\begin{acks}
Computational resources have been provided by the C\'{E}CI, funded by the F.R.S.-FNRS under Grant No. 2.5020.11 and by the Walloon Region. This work was supported by the ARC ``Non-Zero Sum Game Graphs'' project (Fédération Wallonie-Bruxelles), the EOS ``Verilearn'' project (F.R.S.-FNRS \& FWO), and the FWO ``SAILor'' project (G030020N).
\end{acks}



\bibliographystyle{ACM-Reference-Format}
\bibliography{references}


\end{document}